\DeclareMathOperator*{\argmin}{arg\,min}
\theoremstyle{definition}
\newtheorem{theorem}{Theorem}[section]
\newtheorem{lemma}[theorem]{Lemma}
\theoremstyle{definition}
\newtheorem{prob}{Main Problem}[section]
\newtheorem{defn}{Definition}[section]
\newcommand\oprocendsymbol{\hbox{$\bullet$}}
\newcommand\oprocend{\relax\ifmmode\else\unskip\hfill\fi\oprocendsymbol}
\newtheorem*{remark}{Remark}
\newcommand{\ie}{i.e., }
\title{\LARGE \bf
Robotic Coverage for Continuous Mapping Ahead of a Moving Vehicle
}
\author{Barry Gilhuly and Stephen L.\ Smith
\thanks{This research is partially supported by the Natural Sciences and Engineering Research Council of Canada (NSERC) and by General Dynamics Land Systems - Canada. }
\thanks{The authors are with the Department of Electrical and Computer Engineering, University of Waterloo, Waterloo ON, N2L 3G1 Canada (\gilhuly, \smith)
    }
}
\begin{document}

\maketitle
\thispagestyle{empty}

\begin{abstract}

In this paper we investigate the problem of using a UAV to provide current map information of the environment in front of a moving ground vehicle.  We propose a simple coverage plan called a conformal lawn mower plan that enables a UAV to scan the route ahead of the ground vehicle.  The plan requires only limited knowledge of the ground vehicle's future path.  For a class of curvature-constrained ground vehicle paths, we show that the proposed plan requires a UAV velocity that is no more than twice the velocity required to cover the optimal plan.  We also establish necessary and sufficient UAV velocities, relative to the ground vehicle velocity, required to successfully cover any path in the curvature restricted set.  In simulation, we validate the proposed plan, showing that the required velocity to provide coverage is strongly related to the curvature of the ground vehicle's path.  Our results also illustrate the relationship between mapping requirements and the relative velocities of the UAV and ground vehicle.

\end{abstract}

\section{Introduction}

Cooperative mapping using a ground vehicle and one or more unmanned aerial vehicles (UAVs) is an extremely active area of research, with applications in search and rescue, agriculture, military reconnaissance, mapping and inspection~\cite{waslander2013unmanned}.  Working together, an autonomous UAV and a ground vehicle can explore~\cite{cantelli2013autonomous}, inspect~\cite{kazmi2011adaptive}, monitor~\cite{grocholsky2006cooperative}, and track~\cite{yu2015cooperative}.  Such collaborative applications take advantage of the strengths of both vehicles:  the UAV provides a higher vantage point, wider field of view, and faster movement~\cite{li2016hybrid, saska2014coordination, cantelli2013autonomous}, while the ground vehicle gives more detailed imagery, improved location management and can carry supplies to extend the range of the UAV~\cite{mathew2015multirobot}.

In the mapping domain, several studies investigate using a collaborative team of UAVs and ground vehicles to explore an unknown terrain~\cite{kim2014multi,cantelli2013autonomous,hood2017bird}. However these studies generally employ a UAV in a stationary “eye in the sky” position above the ground vehicle~\cite{hood2017bird}, creating a high vantage point, but providing only a limited view of the area that lies ahead.  Others have investigated using the UAV's faster velocity and easier navigation to map a region quickly, allowing a ground vehicle to plan a safe route through difficult terrain while visiting locations of interest~\cite{lazna2018cooperation, cantelli2013autonomous};  in these studies though, the region to be mapped is fixed, and not restricted by the motion or capabilities of the ground vehicle.  In another example, machine learning techniques~\cite{christie2017radiation} are used to plan the ground vehicle's route, but not predict it.  Still other mapping studies use UAVs to explore points of interest while the ground vehicle acts simply as a mobile supply depot, providing support and resources to keep the UAVs flying~\cite{ren2018path, mathew2015multirobot}.

When planning coverage paths, policies typically use a Boustrophedon path, otherwise known as a lawn mower pattern~\cite{choset2000coverage,galceran2013survey}.   Used where the environment is known, these policies first decompose the space into convex polygons, then build a plan to provide complete coverage.  Building on this approach, other studies such as \cite{bochkarev2016minimizing} find the shortest coverage path by minimizing the number of turns on the path. 

A related topic is persistent monitoring, where the environment is constantly changing within a fixed area.  Agents must repeatedly cover terrain that is changing over time~\cite{ahmadi2005continuous,smith2012persistent}.  Unlike persistent monitoring, the mappable area in our scenario has an effective expiry time -- the UAV must cover locations before the ground vehicle comes within a specified ``lookahead distance''.  This is further complicated in that the rate of expiry is not consistent, changing depending on the curvature of the ground vehicle's path.

In this paper we consider a path planning problem where a moving ground vehicle is dependent upon a UAV for advance information regarding the upcoming route.  As the ground vehicle travels, no terrain is allowed to come within a specified distance without first being mapped/covered by the UAV.   The UAV has access to a limited window of the ground vehicle's upcoming path, and must build and execute an appropriate coverage plan based on that knowledge.  Due to the short-term nature of the path information, the UAV must continuously update its mapping plan as the ground vehicle advances and supplies updated directions.  We adapt the common lawn mower plan to take advantage of the limited information, and then characterize the efficiency of our approach by establishing lower and upper bounds on the UAV's velocity. 
	
\paragraph*{Contributions}
The primary contributions of this paper are threefold.  First, we introduce the problem of providing continuous coverage of the path ahead of a moving ground vehicle.  Second, we present a plan capable of providing coverage with only limited knowledge.  We further establish upper and lower bounds on the length of this plan and, based on that distance, estimate the required UAV velocities.  Third, we prove that when the curvature of the ground vehicle path is limited, our plan provides a UAV path that requires no more than twice the velocity of that required to cover the optimal path.  

\paragraph*{Organization}
The paper is structured as follows. In Section~\ref{section-problem-statement} we formally introduce the coverage problem and the assumptions on the ground and aerial vehicles.  In Section~\ref{section-oracle-coverage} we present the Conformal Lawn Mower coverage plan.  In Section~\ref{sec:coverage-efficiency}, the efficiency of the algorithm is developed and we present the solution to the main problem posed in this paper.  In Section~\ref{section-simulation-results} we discuss our simulation results.  Finally, Section  \ref{section-conclusion} contains concluding remarks and notes on possible future work.

\section{Problem Statement}  \label{section-problem-statement}

	\begin{figure} 
		\centering
		\includegraphics[width=0.65\linewidth, clip=true]{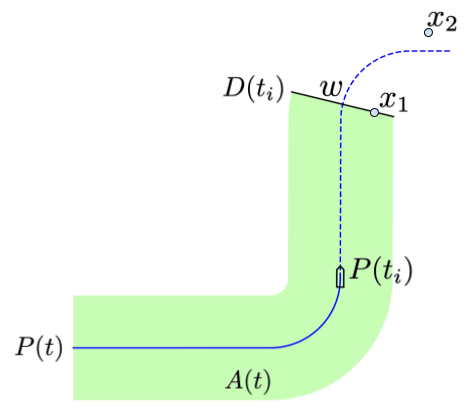}
		\caption{Ground Vehicle Path and the Coverage Corridor.}
		\label{path-illustration}
	\end{figure}
	
	Consider a ground vehicle moving through an environment in $\mathbb{R}^2$ following a smooth path $P(t)$ at a constant velocity $v_{\text{gv}}$ for $t\in [0,t_{\max}]$.	
		
	A distance $d_{\text{map}}$ ahead of the ground vehicle, defines the length of the coverage area. The width of the coverage area $w$ is specified by the operator as a path parameter.   As the ground vehicle moves along the path, this $d_{\text{map}} \times w$ coverage area moves ahead of it, generating a mapping demand.   We generally expect that the total length of the path to be much greater than $w$.  
	
	The leading edge of this coverage area is the deadline.  If we define $t_{\text{map}}$ as the time required for the ground vehicle to traverse the map distance, $\frac{d_{\text{map}}}{v_\text{gv}}$, then given a unit normal, $\overrightarrow{n}$, at point $P(t + t_{\text{map}})$, we can define the leading edge of the coverage area, $D(t)$, as
	\begin{equation*}
	    D(t) = \left\{ x \in \mathbb{R}^2 |  x = P(t+ t_\text{map}) + \alpha \vec{n}, \; \alpha \in \big[-\frac{w}{2},\frac{w}{2}\big]\right\}.
	\end{equation*}
	
	We assume all ground vehicle paths, $P(t)$, are members of the set
	\begin{align*}
	\mathbb{P} = \left[ P(t) \bigg \rvert c(t) \leq \frac{2}{w}, t \in [0,t_{\max}] \right],
	\end{align*}
	where $c(t)$ is the curvature of $P(t)$ at time $t$.  This curvature constraint ensures that as the vehicle progresses along the path P(t), the endpoints of the deadline D(t) always make non-negative progress along the boundary of the coverage area.
	
	From this we can more formally define coverage area $A(t)$ to be the union of points found by sweeping $D(t_\text{dl})$ along $P(t_\text{dl})$ for $t_\text{dl} \in [0,t+t_{\text{map}}]$, expressed as
	\begin{align*}
	A(t) = \cup_{t_\text{dl}=0}^{t+t_{\text{map}}} D(t_\text{dl}).
	\end{align*}
		
	A UAV is deployed to provide mapping imagery, using a monocular vision system to capture terrain data. Similar to~\cite{xu2011consensus}, we model the UAV motion using single integrator dynamics and focus on the high level planning problem. The UAV's camera has a fixed-size square optical footprint with sides of length $f$, where $f < w$.  If $f \geq w$, then the solution is to simply fly the UAV along the ground vehicle path at the same velocity, $v_{\text{uav}} = v_\text{gv}$.  The total area of the map covered by the UAV over the interval $[0,t]$ is denoted $M(t) \subset \mathbb{R}^2$.   
	
	 The UAV is unable to create an optimal mapping plan as it only has a limited window of the ground vehicle's upcoming path.

	Figure~\ref{path-illustration} shows an example path with the ground vehicle located at $P(t_i)$.   The coverage area starts at $P(0)$, is centered on $P(t)$, and continues to a point $d_{\text{map}}$ units ahead of the ground vehicle at $P(t+t_{\text{map}})$.  The environment is assumed to be free of obstacles that affect the UAV.  There may be obstacles that limit the possible trajectories of the ground vehicle; however, we assume that the UAV does not have access to this information.  
	
	 For all points in $x \in A(t)$,  the expiry time $t_{\text{exp}}(x)$ is defined as the time at which $x$ intersects with the deadline $D(t')$ for the first time $t' \leq t$.  If a point is not in $M(t)$ before expiring, then it is considered a coverage failure.  The expiry of a point $x$ is
	\begin{align*}
	t_\text{exp}(x) = \argmin_t \left\{x \in A(t) \right\}.
	\end{align*}
	For example, in Figure \ref{path-illustration}, the point $x_1$, seen on the line $D(t_1)$ has just expired. The point $x_2$ is still outside the coverage area.   

    At time $t = 0$, we assume the UAV is positioned at the beginning of its first pass on one side of the path, ready to start mapping.   The deadline is located at $P(0)$, with the ground vehicle not yet on the path.  After a delay of $\Delta t = \frac{f}{v_\text{gv}}$, enough time for the UAV to map the first pass of the path, the ground vehicle and the deadline begin to move forward. 

	Given this background information, the problem may be formally stated.
	
	\begin{prob}[Complete Coverage]\label{prb:complete-coverage}
		
		Consider a ground vehicle traveling through an environment following a path, $P(t) \in \mathbb{P}$, creating a coverage demand of $A(t)$.  A UAV travels ahead of the ground vehicle producing a coverage area of $M(t)$.  Assume the UAV has knowledge of an upcoming window of the ground vehicle's path, $P(\bar t), \bar t \in [t, t + \Delta t]$. Determine a plan for the UAV that guarantees
		\begin{equation}
		A(t) \subseteq M(t), \forall t \in [0, t_{\max}]. \label{eqn-high-level-description}
		\end{equation}
	\end{prob}
	We seek to characterize this plan's efficiency as follows. 
	\begin{prob}[Proof Of Efficiency]\label{prb:efficiency}
    	Given the plan determined by~\eqref{eqn-high-level-description}, what is the efficiency relative to the optimal coverage plan for the same path, $P(t)$?
    \end{prob}

\section{The Conformal Lawn Mower Path} \label{section-oracle-coverage}

It is well established that a simple, non-overlapping lawn-mower path is an optimal method for covering a rectangular area~\cite{choset2000coverage}.  We propose that for the ground vehicle path, $P(t)$, we can define a \emph{Conformal Lawn Mower} path such that the lines defining the back and forth motion of a regular lawn mower may no longer be parallel.  Instead, the angle between any two adjacent lines is allowed to range from parallel up to a maximum value defined by the curvature of the path and the UAV's optical footprint.  Refer to Figure~\ref{oracle-coverage-path-plan} where the UAV coverage plan (red dashed line) is overlaid on the ground vehicle path.

\begin{figure} 
	\centering
	\includegraphics[width=0.9\linewidth]{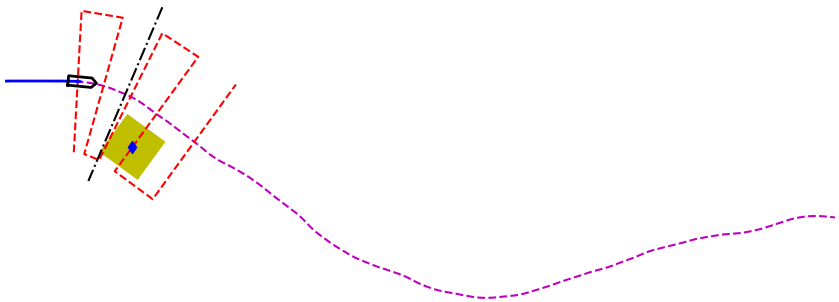}
	\caption{Conformal Lawn Mower plan with a limited window.}
	\label{oracle-coverage-path-plan}
\end{figure}

\begin{defn}[Traversal]
A \emph{Traversal} is a line segment of length $w$ perpendicular to and centered on the path.  To guarantee complete coverage, the distance between any two traversal lines has an upper bound of $f$.    
\end{defn}

\begin{defn}[Transit]
A \emph{transit} is defined as the section of the UAV's coverage plan that connects the ends of two adjacent traversals. Transits are assumed to follow the profile of the path edge (\ie an arc when the path is curved).
\end{defn}

\begin{defn}[Period]
A \emph{period} for a conformal lawn mower plan is a grouping of the movements required to cover a section of the path and return to the same position, but shifted forward along the path.  A period consists of the following movements: traversal, transit, traversal, transit.
\end{defn}

The conformed plan is a sequence of alternating traversals and transits that allow the UAV to completely map $A(t)$.  The procedure for constructing a conformal lawn mower path is shown in Algorithm~\ref{oracle-coverage-algorithm}.

\begin{algorithm}
	\caption{Conformal Lawn Mower Plan}\label{oracle-coverage-algorithm}
  \begin{enumerate}
		\item Add an initial traversal at $P(0)$ to the plan.
		\item Find the first point on the path $P(t)$ such that a traversal centred at $P(t)$ has an endpoint at distance $f$ from its corresponding endpoint on the last traversal.  \label{enm:walking-step}
		\item Add a transit to this traversal at $P(t)$ and the traversal to the plan, where successive transits alternate sides.
		\item If the ground vehicle has stopped, add a final transit and traversal to the plan and exit.
		\item Otherwise, when there is new path information, repeat from step~\ref{enm:walking-step}.
\end{enumerate}
\end{algorithm}

The UAV uses the provided path information to map the initially known $A(t)$ following the conformal plan.  As the ground vehicle moves forward and additional path information comes available, the UAV plan is extended, allowing the UAV to map the new territory.

\begin{theorem}[Complete Coverage] \label{thm:complete-coverage}
The Conformal Lawn Mower plan in Algorithm~\ref{oracle-coverage-algorithm} provides complete coverage of path $P(t)$.
\end{theorem}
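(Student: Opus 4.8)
The plan is to verify the coverage condition $A(t) \subseteq M(t)$ of Problem~\ref{prb:complete-coverage} directly, by showing that the region swept by the camera footprint along the traversals of the conformal plan contains the entire coverage demand, and that it does so in an order consistent with the deadlines. I would separate this into a \emph{spatial} claim (the footprint sweeps tile the coverage region with no uncovered gaps) and a \emph{temporal} claim (each point is mapped before $D$ reaches it), and establish the former by induction on the periods of the plan.

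First I would pin down the geometry of the demand. Since each $D(t_\text{dl})$ is a width-$w$ segment centered on and perpendicular to $P$, the set $A(t)$ is the tube of width $w$ swept along the path. The curvature constraint $c(t) \le 2/w$ gives a radius of curvature of at least $w/2$, which is exactly the condition under which the inner edge of this tube (at radius $r - w/2 \ge 0$) never folds back on itself; equivalently, as noted beneath the definition of $\mathbb{P}$, the endpoints of $D(t)$ make non-negative progress. This makes $A(t)$ a well-defined tube with simple inner and outer boundaries, reducing complete coverage to the purely geometric question of filling this tube with footprint sweeps.

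Next I would analyze a single period. Sweeping the $f \times f$ footprint along one traversal (a length-$w$ segment spanning the tube) covers the full width of the tube over a band of width $f$ in the along-path direction. Step~\ref{enm:walking-step} of Algorithm~\ref{oracle-coverage-algorithm} places the next traversal at the first point where an endpoint reaches distance $f$ from the corresponding endpoint of the previous traversal; because the traversals fan apart fastest on the convex side, this binding endpoint is the \emph{outer} one, so the outer endpoints sit exactly $f$ apart and all interior points are closer. The main work is then to show that between two such consecutive traversals the along-path separation never exceeds $f$ anywhere across the tube, so that the width-$f$ footprint band leaves nothing uncovered and the coverage of adjacent periods abuts or overlaps. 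I would bound this separation on the convex side using $c \le 2/w$, and observe that the concave side is easier since there the traversals converge; I would also note that the transits, which follow the tube boundary, account for the edge strips and connect the traversals without introducing gaps.

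The hardest step is this convex-side gap analysis: the footprint is a square of fixed side $f$, not an infinitely thin line, and on a curve the consecutive traversals are non-parallel, so I must show that the spacing rule together with $c \le 2/w$ keeps the maximal perpendicular separation at or below $f$ along the entire traversal, including out at the outer endpoint where the fanning is worst. Once the spatial claim holds, the temporal claim follows from the construction: the initial delay $\Delta t = f/v_\text{gv}$ lets the UAV finish the first pass before the ground vehicle begins moving, and thereafter the plan sweeps each newly revealed band of $A$ before the deadline $D$ advances into it, so $A(t)\subseteq M(t)$ for all $t\in[0,t_{\max}]$. The velocity actually required to realize this ordering is quantified separately in Section~\ref{sec:coverage-efficiency}.
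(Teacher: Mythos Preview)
Your spatial argument is essentially the paper's argument, but worked out in more detail: the paper simply observes that Algorithm~\ref{oracle-coverage-algorithm} never lets adjacent traversal endpoints separate by more than $f$, asserts that two sequential passes therefore capture everything between them, and concludes $A(t)\subseteq M(t)$.  It does \emph{not} carry out the convex-side gap analysis you flag as the hardest step; that geometric fact is taken as evident once the spacing rule is stated.  Your identification of the outer (convex) endpoint as the binding one, and your use of the curvature bound to keep the tube simple, make explicit what the paper leaves implicit.

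The one structural difference is that you fold a temporal claim into this theorem, arguing that each band is mapped before $D$ reaches it.  In the paper, Theorem~\ref{thm:complete-coverage} is purely the spatial statement; the ``before expiry'' part is deferred to Theorem~\ref{thm:correctness-of-plan}, which combines Theorem~\ref{thm:complete-coverage} with the velocity bound of Theorem~\ref{thm:twice-optimal}.  So your proposal is correct, but proves slightly more than the paper intends at this point, and your closing remark that the required velocity is ``quantified separately in Section~\ref{sec:coverage-efficiency}'' is exactly how the paper organizes it.
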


\begin{proof}
From Algorithm~\ref{oracle-coverage-algorithm}, the ground vehicle path is sampled, placing a new traversal where necessary to maintain the maximum separation.  We start by placing a traversal at $t=0$.  Then, travelling the path,  calculate the distance between the endpoints of the last traversal added to the path, and a prospective one at the current location of P. When the distance of either endpoint from the previous traversal is $f$, the algorithm places a transit, locating it  on the opposite side from the previous one, then places the prospective traversal. By enforcing the distance between traversals to be $f$, the algorithm ensures we have complete coverage. This process repeats until the end of the path has been reached.

No two traversals are ever separated by more than $f$, so that two sequential passes of the UAV, one on each traversal, captures all of the area of $A(t)$ between those traversals in $M(t)$. Since all of $P(t)$ is sampled by traversals, and $A(t)$  is defined by $P(t)$, then 
\begin{equation}
    A(t) \subseteq M(t).
\end{equation}
Therefore, the Conformal Lawn Mower completely covers the swept area, $A(t)$, defined by the path, $P(t)$.
\end{proof}

\section{Coverage Efficiency} \label{sec:coverage-efficiency}
 
We begin by proving the performance of the conformal lawn mower plan. Using these results, we present our solution to Problem~\ref{prb:complete-coverage}.  Finally, we demonstrate the suboptimality of the conformal plan, by presenting a handcrafted alternative.

\subsection{Proof of Efficiency}
The UAV has perfect knowledge of the ground vehicle's intended path for a limited window -- the UAV is given the ground vehicle's path for the range $[t_{i}, t_{i}+\Delta t]$.  The UAV must create a coverage plan that ensures all of $P(t), t \in [t_{i}, t_{i}+\Delta t]$ is covered prior to expiry.  

We will demonstrate a worst case scenario that minimizes the distance the ground vehicle travels relative to the UAV.  Based on this, we can establish a sufficient relative velocity for the UAV to successfully cover any ground vehicle path within the curvature constrained set $\mathbb{P}$.  

\begin{theorem}[Efficiency] \label{thm:twice-optimal}
For any path P(t) in the set $\mathbb{P}$, the conformal lawn mower plan has a length that is no more than two times the optimal coverage plan.
\end{theorem}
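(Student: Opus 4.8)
The plan is to bound the conformal plan and the optimal plan separately in terms of the arc length $L$ of the center path $P(t)$, the corridor width $w$, and the footprint $f$, and then compare. First I would decompose the conformal plan into its two constituent motions: the $N$ traversals, each of length $w$ and contributing $Nw$ in total, and the transits, each of which follows a path edge. The term $Nw$ is the dominant, shape-dependent cost, so the theorem essentially reduces to bounding the traversal count $N$ against the optimal length $\ell^\star$.

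The heart of the argument is the upper bound on $N$, and this is exactly where the curvature constraint $c(t)\le 2/w$ enters. By the spacing rule in Algorithm~\ref{oracle-coverage-algorithm}, a new traversal is placed as soon as the faster-moving (outer) endpoint advances a distance $f$. At a point of radius of curvature $R=1/c$, that endpoint lies at radius $R+w/2$ about the instantaneous center of curvature, so advancing it by $f$ advances the center of the path by only
\[
\delta \;=\; \frac{R}{\,R+w/2\,}\,f .
\]
The key observation is that $\delta\ge f/2$ holds \emph{precisely} when $R\ge w/2$, i.e.\ exactly under $c\le 2/w$. Integrating the per-step advance along the path then gives $N=\int_0^L \delta^{-1}\,ds \le 2L/f$, hence a total traversal length $Nw\le 2Lw/f$. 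In other words, the curvature bound is precisely the condition guaranteeing that the fanning of the traversals on the concave side never more than doubles the straight-path traversal density.

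For the optimal plan I would use an area lower bound. A square footprint of side $f$ sweeps area at most $f$ times the path length (up to boundary terms), and a direct computation shows the area of $A(t)$ equals $Lw$ independent of curvature, since the terms linear in the offset cancel between the inner and outer edges. Hence $\ell^\star\ge Lw/f$, a value attained to leading order by the ordinary lawn mower on the straightened (rectangular) corridor of the same area. Combining the two estimates, the traversal portion alone already satisfies $Nw\le 2Lw/f\le 2\,\ell^\star$.

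The main obstacle I anticipate is keeping the constant at exactly two once the transits are included, since the bare area bound $\ell^\star \ge Lw/f$ leaves the transit contribution uncharged and only yields a factor $2+O(f/w)$. To close this I would benchmark against the ordinary lawn mower on the straightened corridor (the optimal plan within the same cross-sweep family): its traversal budget is $Lw/f$ and its transit budget is $\approx L$, while the spacing rule forces every conformal transit to have length at most $f$, so the conformal transit total is at most $Nf\le 2L$, i.e.\ at most twice the straight-path value. With both the traversal budget and the transit budget at most doubled, the conformal length is at most twice that of the straightened lawn mower. The delicate steps are therefore to pin down the precise optimal comparator for a curved corridor and to verify that cross-path sweeping cannot be avoided; the clean, essential ingredient, by contrast, is the traversal-count bound arising from $\delta\ge f/2\iff R\ge w/2$.
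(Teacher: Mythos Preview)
Your approach is essentially the paper's: both lower-bound the optimal via area to get $\ell^\star\ge Lw/f$ (the paper packages this as Lemmas~\ref{lma:opt-straight-path} and~\ref{lma:same-area}), and both upper-bound the conformal plan using the observation that under $R\ge w/2$ the center line advances at least $f/2$ between consecutive traversals---exactly your computation $\delta=Rf/(R+w/2)\ge f/2$.

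The place where your constant threatens to bloat to $2+O(f/w)$ is a bookkeeping slip, not a real obstacle: the UAV does not move $w$ along a traversal, it moves $w-f$, since its square footprint already has width $f$. The paper's per-half-period accounting makes this clean: one traversal of length $w-f$ plus one transit of length at most $f$ (the spacing rule stops precisely when the outer endpoint has advanced $f$, and the UAV's transit arc lies $f/2$ inside that edge) gives UAV distance at most $w$ against center-line advance at least $f/2$, hence $d_{\text{uav}}/d_{\text{gv}}\le 2w/f$ directly. Combined with the lower bound $w/f$ on the optimal ratio, the factor of two is exact with nothing left over. If you carry the same $w-f$ through your global count you get $N(w-f)+N f = Nw \le 2Lw/f \le 2\ell^\star$ immediately, and your proposed detour through a ``straightened comparator'' (which, as written, double-counts the straight lawn mower's budget as $Lw/f + L$ rather than $Lw/f$) becomes unnecessary.
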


To prove this result we require a few preliminary lemmas.

\begin{lemma}[The Optimal Straight Path Ratio] \label{lma:opt-straight-path}
For a straight path $P(t)$ (Figure~\ref{fig:uav-optimal-straight}), the ratio of the distance travelled by the UAV to that of the ground vehicle is $\frac{w}{f}$.
\end{lemma}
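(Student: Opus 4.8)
The plan is to reduce the claim to the classical problem of covering a rectangle with minimum path length. For a straight path the deadline $D(t)$ is a segment of fixed length $w$ translating along a line, so the coverage area $A$ accumulated while the ground vehicle advances a distance $L$ is exactly a rectangle of width $w$ and length $L$. I would then compute the shortest UAV path whose footprint sweep tiles this rectangle and compare its length to $L$, the distance the ground vehicle covers over the same stretch; the asserted ratio $w/f$ should fall out as the quotient of these two lengths.

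First I would establish achievability with an explicit construction. Since $f<w$, the width $w$ can be partitioned into $w/f$ parallel longitudinal strips, each of footprint width $f$ and running the full length $L$ (taking $w$ to be an integer multiple of $f$, or folding the leftover fractional strip into the boundary terms below). A boustrophedon that sweeps these strips back and forth has length $(w/f)\,L$ plus the perpendicular turnarounds connecting adjacent strips, whose total is at most $w$. Invoking the standing modelling assumption that the path is much longer than the corridor is wide, $L \gg w$, the turnaround contribution is negligible against $(w/f)\,L$, so this path realises the ratio $w/f$.

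Next I would prove the matching lower bound to certify optimality. The key observation is that for gap-free coverage the footprint contributes an effective swath of width at most $f$ transverse to the direction of motion, so the area tiled per unit path length is at most $f$; covering a region of area $wL$ therefore forces any UAV path to have length at least $wL/f$, that is, a ratio at least $w/f$. Together with the construction this pins the optimum at exactly $w/f$.

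The step I expect to be the main obstacle is making this lower bound airtight, since a freely rotating square footprint can momentarily present a transverse extent exceeding $f$. I would argue that any such tilted orientation either leaves gaps that must later be refilled or creates overlap, so that neither reduces the total length needed for complete coverage; formally I would project the footprint onto the width axis and charge each point of the corridor to the sub-path that first covers it. The remaining bookkeeping, discharging the $O(w)$ turnaround and fractional-strip terms against $(w/f)\,L$ under $L \gg w$, is routine once the transverse-width bound is secured.
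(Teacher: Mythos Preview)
Your lower-bound argument is the same area-counting idea the paper uses: the corridor has area $wL$, the footprint lays down area at rate at most $f$ per unit of UAV travel, hence $d_{\text{uav}}\ge (w/f)\,d_{\text{gv}}$. Where you diverge is in the achievability construction. You cover with \emph{longitudinal} strips of width $f$ running along the path, which forces you to carry an $O(w)$ turnaround term and then invoke $L\gg w$ to kill it. The paper instead uses \emph{transverse} passes: traversals of length $w-f$ perpendicular to the path, connected by transits of length $f$ along it. One full period then has UAV length $2(w-f)+2f=2w$ while the ground vehicle advances $2f$, so the ratio is exactly $w/f$ with no asymptotic fudge and no integrality assumption on $w/f$. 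Beyond being cleaner for this lemma, the transverse orientation is the one that generalises: it is precisely the template for the conformal lawn mower on curved paths, where ``longitudinal'' strips would bend and lose their uniform width. Your extra care about a rotated footprint inflating the instantaneous swath width is a fair worry that the paper glosses over, but note that in this setting the footprint is modelled as fixed and axis-aligned, so the issue does not actually arise.
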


\begin{proof}
\begin{figure}
    \centering
    \includegraphics[width=0.75\linewidth]{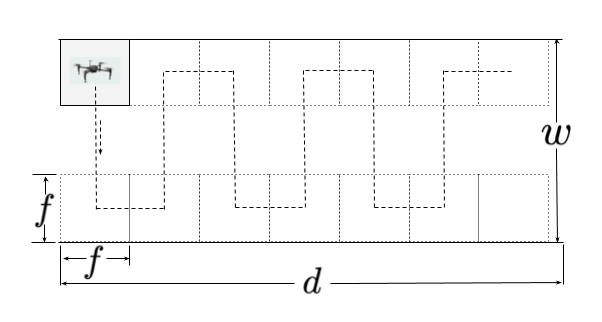}
    \caption{An optimal coverage plan for a straight path.}
    \label{fig:uav-optimal-straight}
\end{figure}
The ground vehicle travels down the centre of the path, moving a distance of $d$.  Therefore the total coverage demand is $w d$. The ratio of the ground vehicle velocity to the UAV is determined by
\begin{align*}
    \frac{d}{v_{\text{gv}}} \geq \frac{w d}{f v_{\text{uav}}}.
\end{align*}
Since the velocity of both vehicles is fixed, we can eliminate the time component on both sides and state this in terms of distance.  Therefore, the ratio of the vehicle distances is
\begin{align}
    \frac{d_{\text{uav}}}{d_{\text{gv}}} \geq 
     \frac{v_{\text{uav}}} {v_{\text{gv}}} 
    \geq \frac{w}{f}. \label{eqn:optimal-distance-ratio}
\end{align}

\end{proof}

\begin{remark}[Optimality of the Lawn Mower Coverage Plan]
    The lawn mower coverage plan, illustrated in Figure~\ref{fig:uav-optimal-straight}, is an optimal plan for the straight path.  Each traversal is $w-f$ in length and spaced $f$ apart, meaning that for one complete period of two traversals and two movements of $f$, the ratio of UAV distance to ground vehicle distances is
    \begin{align*}
        \frac{d_{\text{uav}}}{d_{\text{gv}}} \geq \frac{ 2 (w-f) + 2 f }{ 2 f } = \frac{w}{f}.
    \end{align*}
    \oprocend
\end{remark}

The length of the UAV coverage plan for any arbitrary path can not be any shorter than the optimal coverage plan for the straight path.  

\begin{lemma}[Arbitrary Paths have the Same Area] \label{lma:same-area}
An arbitrary path in $\mathbb{P}$ of length $d$ has an optimal coverage plan at least as long as the optimal coverage plan for a straight path of the same length. 
\end{lemma}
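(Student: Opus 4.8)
The plan is to reduce the comparison of coverage-plan lengths to a comparison of the areas that must be covered. Concretely, I would (i) show that the coverage demand generated by any path in $\mathbb{P}$ of length $d$ has area exactly $wd$, the same as the straight path; (ii) invoke the elementary fact that a footprint of width $f$ sweeping along a plan of length $L$ can cover an area of at most $fL$, so that covering a demand of area $\mathcal{A}$ forces $L \ge \mathcal{A}/f$; and (iii) combine these with Lemma~\ref{lma:opt-straight-path} to conclude.

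The heart of the argument is step (i). I would parametrize the corridor $A$ swept along the path by arc length $s \in [0,d]$ and signed offset $\alpha \in [-\tfrac{w}{2}, \tfrac{w}{2}]$, writing each point as $P(s) + \alpha\, \vec{n}(s)$, exactly as in the definition of $D(t)$. A Frenet computation gives the Jacobian of this map as $1 - \alpha\, c(s)$, where $c(s)$ is the (signed) curvature. The defining constraint of $\mathbb{P}$, namely $c \le \tfrac{2}{w}$, gives $|\alpha\, c(s)| \le 1$ on the whole corridor, so the Jacobian is non-negative -- this is precisely the ``non-negative progress'' property already noted after the definition of $\mathbb{P}$, and it guarantees that the corridor does not fold over on itself locally. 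Integrating the Jacobian then yields
\begin{equation*}
\operatorname{Area}(A) = \int_0^d \!\! \int_{-w/2}^{w/2} \big(1 - \alpha\, c(s)\big)\, d\alpha\, ds = \int_0^d w\, ds = wd,
\end{equation*}
since the term linear in $\alpha$ integrates to zero over the symmetric interval $[-\tfrac{w}{2}, \tfrac{w}{2}]$. Thus the curvature contributes nothing to the total area, and every path in $\mathbb{P}$ of length $d$ generates the same demand area $wd$ as the straight path.

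With the areas equal, the conclusion follows quickly. By step (ii), any coverage plan for the arbitrary path has length at least $wd/f$. By Lemma~\ref{lma:opt-straight-path} and the remark following it, the optimal plan for the straight path of length $d$ has length exactly $\tfrac{w}{f}\,d = wd/f$. Hence the optimal plan for the arbitrary path is at least $wd/f$, i.e.\ at least as long as the optimal plan for the straight path, which is the claim.

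The main obstacle is making step (i) fully rigorous. The Jacobian computation establishes $\operatorname{Area}(A) = wd$ only when the corridor map is injective; a curve of bounded curvature can in principle loop back and cause the corridor to self-overlap, in which case the true (set-theoretic) area of the union $A(t)$ would be strictly smaller than $wd$ and the area-based lower bound would weaken. I would handle this by restricting attention to the regime relevant here -- where the swept corridor is simple -- appealing to the non-negative-progress property to rule out local folding and noting that this is the intended setting for a ground-vehicle route. The secondary, more routine point is justifying the area-to-length bound of step (ii) up to the lower-order turnaround terms, exactly as the straight-path remark accounts for them per period.
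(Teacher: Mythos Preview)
Your proposal is correct and follows the same two-step skeleton as the paper: first establish that the corridor swept by any path in $\mathbb{P}$ of length $d$ has area exactly $wd$, then use the area/footprint lower bound together with Lemma~\ref{lma:opt-straight-path} to compare optimal plan lengths. The difference lies entirely in how the area is computed. The paper discretizes the path into short pieces, replaces each piece by a constant-curvature arc, evaluates the area of each resulting annular sector explicitly as $\theta r w = \Delta d\, w$, sums, and passes to the limit $\Delta d \to 0$. You instead parametrize the corridor by arc length and signed offset and integrate the Frenet Jacobian $1-\alpha\,c(s)$ directly, observing that the curvature term is odd in $\alpha$ and therefore integrates to zero over $[-w/2,w/2]$. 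Your route is shorter and makes the role of the curvature bound (ensuring the Jacobian stays non-negative) explicit, whereas the paper's annular-sector calculation is more elementary and avoids invoking the Frenet frame. You are also more careful than the paper on two points it leaves implicit: the area-to-length bound in your step~(ii), and the global self-intersection caveat, which affects both arguments equally but is not mentioned in the paper's proof.
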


\begin{proof}
    We first prove an arbitrary path in $\mathbb{P}$ has the same area as an equivalent straight path with the same centre-line length.  The comparison of optimal coverage path lengths flows directly from this fact.  
    
    Let $S$ be an arbitrary path in $\mathbb{P}$ of length $d$ and width $w$.  The arbitrary path can be decomposed into a set of $n$ curve sections, $\left\{ s_1, s_2, \ldots, s_n \right\}$, where each section has a centre-line length $\Delta d$ such that
    \begin{align*}
        d = \sum_{i=1}^{n} \Delta d.
    \end{align*}
    We then approximate each segment $s_i$ by a segment $s_i'$, which has length $\Delta d$ and a constant curvature equal to the maximum curvature of $s_i$.  The concatenation of these segments $s_1',\ldots,s_n'$ creates a curve $S_n$.  Notice that by the smoothness of paths in $\mathbb{P}$, we have $S_n \to S$ as $n\to \infty$ and thus $\Delta d \to 0$.
    
    The total area of $S_n$ is the sum of the areas of all $n$ of its sections,  
    \begin{align*}
        \text{area}(S_n) = \sum_{i = 1}^n \text{area}(s_i'). 
    \end{align*}
    
    For each section, $s_i'$, the area is calculated in one of two ways.  If the section $s_i$ is straight, its area is $\Delta d w$.  Otherwise, letting $r$ be one over the curvature of the section, the area of curved section $s_i'$ is
    \begin{align*}
        \text{area}(s_i') &= \frac{\theta}{2\pi}\left((\pi(r + \frac{w}{2})^2 - (\pi(r + \frac{w}{2})^2\right) = \theta\left(rw\right).
    \end{align*}
    But we also know that $\theta = \frac{\Delta d}{r}$.  Substituting into our equation gives us
    \begin{align*}
        \text{area}(s_i') = \frac{\Delta d}{r} rw = \Delta d w.
    \end{align*}
    Therefore the area of $S$ is
    \begin{align*}
        \text{area}(S) &= \lim_{\Delta d \rightarrow 0} \sum_{i = 1}^n \Delta d w = d w.
    \end{align*}
    This is exactly the area of a straight path of length $d$ and width $w$.      
    
    From Lemma~\ref{lma:opt-straight-path}, the UAV must travel at least $w/f$ times as far as the ground vehicle when covering a straight path.  Since the arbitrary path has exactly the same area as the straight path, it must generate exactly the same coverage demand.  The UAV's ability to satisfy the coverage demand remains the same, governed by the size of its optical footprint, $f$.  Therefore, the optimal coverage plan for the arbitrary path in $\mathbb{P}$ must be at least as long as the optimal coverage plan for the equivalent straight path.
\end{proof}

\begin{remark}[The need for a curvature constraint]
Our analysis is restricted to paths in $\mathbb{P}$ whose curvature is at most $2/w$. If a path contains a curve with curvature greater than $2/w$, the deadline endpoint on the inside of the curve moves in the opposite direction of the ground vehicle motion, resulting in a reduced swept area $A(t)$.  In this scenario Lemma \ref{lma:opt-straight-path} no longer holds, and thus the analysis does not follow through.
\oprocend
\end{remark}

Based on Lemma \ref{lma:opt-straight-path} and Lemma \ref{lma:same-area}, we can now prove Theorem~\ref{thm:twice-optimal}. 

\begin{proof}[Proof of Theorem \ref{thm:twice-optimal}]
Consider a straight ground vehicle path of width $w$ with a UAV providing mapping coverage using an optical footprint of size $f$.  If we consider the UAV path as a series of traversals and transits, that path is maximized if the transits are all of length $f$, as illustrated in Figure~\ref{fig:max-path}. Since the maximum separation between traversals is fixed at $f$, to find the worst case distance ratio between the ground vehicle and the UAV, we must minimize the ground vehicle distance.

\begin{figure}
    \centering
    \includegraphics[width=0.65\linewidth]{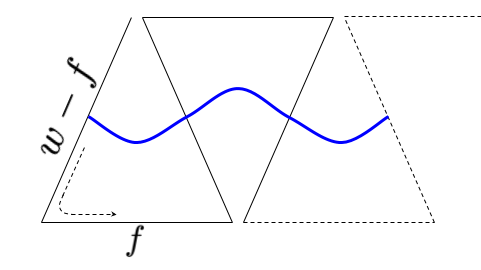}
    \caption{A path minimizing $d_{\text{gv}}$ with respect to $d_{\text{uav}}$.}
    \label{fig:max-path}
\end{figure}

  Starting with parallel traversals, we increase the angle between them. As the angle is increased, the curvature of the path increases, and the length of the path segment between the traversals decreases. Since each traversal must cross the path at right angles, the path must be a series of alternating circular arcs, with a curvature directly dictated by the angle between the traversals.  We can express the length of the ground vehicle's path segment between two traversals as
\begin{align}
    d_{\text{gv}} = r \theta = r \frac{f}{\frac{w}{2} + r}, \quad r \geq \frac{w}{2}. \label{eqn:gv-traversal-distance}
\end{align}
 Note that the distance in~\eqref{eqn:gv-traversal-distance} is minimized when $r = \frac{w}{2}$.
 
The UAV travels the length of one traversal, followed by a transit to the next traversal.  Therefore, the distance that the UAV must travel is
\begin{align}
    d_{\text{uav}} &= (w-f) + (\frac{w-f}{2} + r) \theta   \nonumber \\
    &=  (w-f) + (\frac{w-f}{2} + r) \frac{f}{\frac{w}{2} + r}  \nonumber \\
    &\leq (w-f) + f = w, \label{eqn:uav-max-distance}
\end{align}
since the traversals are separated by not more than $f$.  Therefore, the ratio of the UAV distance~\eqref{eqn:uav-max-distance} to the ground vehicle distance~\eqref{eqn:gv-traversal-distance} can be calculated
\begin{align}
    \frac{d_{\text{uav}}}{d_{\text{gv}}} &\leq \frac{w}{r \frac{f}{\frac{w}{2} + r}} \leq 2 \frac{w}{f}, \quad \text{ if } r = \frac{w}{2}. \label{eqn:worst-arbitrary}
\end{align}
The upper bound on the ratio of the velocity the UAV requires relative to the ground vehicle velocity on a arbitrary path is then given by
\begin{align} 
    \frac{v_{\text{uav}}}{v_{\text{gv}}} =
    \frac{d_{\text{uav}}}{d_{\text{gv}}}  &\leq 2 \frac{w}{f}. 
\end{align}

We know from Lemma~\ref{lma:same-area} and~\eqref{eqn:optimal-distance-ratio} that the ratio of the length of the coverage plan to the ground vehicle distance for any arbitrary path must be at least $\frac{w}{f}$.  Therefore for any arbitrary path in $\mathbb{P}$, following a conformal lawn mower plan can require no more than twice the velocity necessary for the optimal coverage plan on the same path.
\end{proof}

\begin{remark}[Another Optimal Path]
     Note that a small modification to the worst case UAV plan shown in Figure~\ref{fig:max-path} results in the optimal plan.  In particular, by traveling each traversal in the opposite direction, the UAV travels in a 'W' motion on traversals, and the transit between each traversal is 0 instead of $f$.  The resulting path has length $\frac{w}{f}d_{\text{gv}}$, which is exactly the optimal solution. If UAV had global knowledge of the path, it could select the appropriate direction to perform the traversals.  However, given its limited knowledge of the future path, this is not possible, and the resulting path can be a factor of two times longer.  This illustrates the impact of limited path information on the coverage efficiency. 
    \oprocend
\end{remark}

\begin{remark}[A near worst-case path]
     A single curve of maximum curvature $2/w$ is in $\mathbb{P}$ and provides a ratio of distances traveled that is nearly as large as for the path in Figure~\ref{fig:max-path}.  In particular, given the UAV and ground vehicle distances 
      \begin{align*}
          d_{\text{gv}} = 2 r \frac{f}{r + \frac{w}{2} }, \quad d_{\text{uav}} &= 2r \frac{f}{r + \frac{w}{2} } + 2(w-f),
      \end{align*}
      it is straightforward to calculate their ratio as 
      \begin{align}
          \frac{d_{\text{uav}}}{d_{\text{gv}}} &= 1 + \frac{w-f}{f} \left(\frac{r + \frac{w}{2}}{r} \right) \nonumber \\
          &= 1 + \frac{2(w-f)}{f} \nonumber \\
           &\leq 2 \frac{w}{f}.  \label{eqn:near-worst-case}
      \end{align}
      The ratio of \eqref{eqn:near-worst-case} is at maximum when $r = \frac{w}{2}$ and therefore reduces to 
     \begin{align}
         \frac{d_{\text{uav}}}{d_{\text{gv}}} &= 1 + \frac{2(w-f)}{f} = 2 \frac{w}{f} - 1. \nonumber
    \end{align}
     \oprocend
 \end{remark}

\subsection{The Correctness of the Conformal Lawn Mower Plan}
With Theorem~\ref{thm:complete-coverage} and Theorem~\ref{thm:twice-optimal} we have shown both complete coverage of the path $P(t)$ and the sufficient velocity the UAV requires for any path in $\mathbb{P}$.  We can now state our main result.

\begin{theorem}[Correctness of Conformal Lawn Mower Plan] \label{thm:correctness-of-plan}
    Consider a ground vehicle, travelling at velocity $v_{\text{gv}}$, with initial condition at the start of path $P(t), t = 0$. The path $P(t)$ has a coverage width of $w$ and a maximum curvature $\frac{2}{w}$.  Then, a UAV, with velocity $\geq 2\frac{w}{f} v_{\text{gv}}$ and following the conformal lawn mower path solves problem \ref{prb:complete-coverage}.
\end{theorem}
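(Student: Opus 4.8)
The plan is to synthesize the two preceding results—Theorem~\ref{thm:complete-coverage} (completeness) and Theorem~\ref{thm:twice-optimal} (the distance bound)—and to upgrade the aggregate distance statement into a per-instant deadline guarantee. Theorem~\ref{thm:complete-coverage} already establishes that, once fully executed, the conformal lawn mower plan maps every point of $A(t)$; what remains is to show that the chosen UAV velocity is large enough that each point is in fact covered \emph{before} its expiry time $t_{\text{exp}}(x)$, so that the containment $A(t)\subseteq M(t)$ holds at every $t\in[0,t_{\max}]$ and not merely in the limit.

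First I would reduce the timing requirement to a statement about a single period of the plan. By the construction in Algorithm~\ref{oracle-coverage-algorithm}, while the ground vehicle advances a centre-line distance $d_{\text{gv}}$ between two consecutive traversals, the UAV must execute one traversal and one transit, a path of length $d_{\text{uav}}$. The local computation underlying Theorem~\ref{thm:twice-optimal} bounds this by $d_{\text{uav}}\le 2\frac{w}{f}\,d_{\text{gv}}$. Dividing by the respective velocities and using the hypothesis $v_{\text{uav}}\ge 2\frac{w}{f}v_{\text{gv}}$ gives
\begin{align*}
\frac{d_{\text{uav}}}{v_{\text{uav}}} \le \frac{2\frac{w}{f}\,d_{\text{gv}}}{2\frac{w}{f}\,v_{\text{gv}}} = \frac{d_{\text{gv}}}{v_{\text{gv}}},
\end{align*}
so the time the UAV spends on each period is at most the time the deadline takes to sweep the same stretch of path.

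Next I would turn this per-period inequality into an invariant by induction on the periods. The initialization uses the head start described in the problem setup: the deadline rests at $P(0)$ during the delay $\Delta t = f/v_{\text{gv}}$, during which the UAV completes its first pass, and the $d_{\text{map}}$ lookahead keeps the deadline a fixed distance behind the leading traversal. The inductive step then shows that if, at the start of a period, the mapped region $M$ already contains everything the deadline has reached, the time inequality above guarantees the UAV finishes the next period no later than the deadline reaches its far traversal; hence no point of $A(t)$ can expire unmapped. I would also confirm that the window $[t,t+\Delta t]$ always supplies enough future path to place the next traversal, since $\Delta t = f/v_{\text{gv}}$ corresponds to exactly the maximal traversal spacing $f$ on a straight path and to a smaller centre-line advance on curved sections.

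The main obstacle is precisely this passage from the worst-case distance ratio of Theorem~\ref{thm:twice-optimal} to a guarantee that holds pointwise in time. The distance bound alone only certifies that the UAV keeps pace \emph{on average}; I must rule out a transient in which the deadline momentarily overtakes the UAV within a period—for instance, while the UAV is midway along a traversal and the deadline is advancing toward it. Handling this reduces to showing that the one-period buffer secured at initialization (equivalently, the $d_{\text{map}}$ lookahead margin) is never eroded, which in turn rests on the fact that the ratio $2w/f$ bounds the UAV travel \emph{locally} on every period and not just globally—exactly the quantity computed in~\eqref{eqn:worst-arbitrary}.
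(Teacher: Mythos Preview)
Your proposal is correct and follows essentially the same approach as the paper's proof: invoke Theorem~\ref{thm:complete-coverage} for completeness, use the initial head start from the delay $\Delta t = f/v_{\text{gv}}$, and apply the local ratio bound of Theorem~\ref{thm:twice-optimal} to argue that the UAV never falls behind the deadline. Your version is in fact more careful than the paper's---the paper's proof is a brief paragraph that simply asserts the UAV ``maintains or extends its position'' without spelling out the per-period time inequality, the induction, or the transient-overtaking concern you identify and resolve.
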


\begin{proof}
For complete coverage of $P(t)$, the region $A(t)$ swept out by $D(t), t \in [0, t_{\max}]$, must be entirely within the mapped area $M(t)$ before expiry.  Based on Theorem~\ref{thm:complete-coverage}, we can assert that $M(t)$ contains all of $A(t)$.  We must now prove that no elements of $A(t)$ expired before they were included within $M(t)$.

From the initial conditions, the UAV starts ahead of the deadline with at least one completed traversal already mapped before the ground vehicle starts moving.  For all remaining elements of $A(t)$ to be mapped correctly, we only need to show that the UAV maintains or extends its position ahead of the ground vehicle.  If the UAV uses a velocity that is at least $\frac{2w}{f}$, where $w$ is the width of the path and $f$ the UAV's optical footprint, then Theorem~\eqref{thm:twice-optimal} asserts this is true.

Therefore, all of $A(t)$ is successfully mapped, and Problem \ref{prb:complete-coverage} is solved.
\end{proof}

\subsection{Suboptimality of the Conformal Lawn Mower Plan}

\begin{table} 
\begin{center}
\caption{A Comparison of Conformal vs. Handcrafted Plans.}
\label{tbl:conf-vs-opt}
\begin{tabular}{@{} c  c  c  c  c @{}}
\toprule
UAV & \multicolumn{2}{ c }{Conformal} & \multicolumn{2}{ c }{Handcrafted}  \\ 
Velocity & Distance & \%Coverage & Distance & \%Coverage \\
\midrule
20	&	8000	&	22	&	8000	&	65 \\
21	&	8400	&	23	&	8400	&	89\\
22	&	8800	&	28	&	8800	&	100\\
23	&	9200	&	35	&	9027	&	100\\
24	&	9600	&	43	&	9038	&	100\\
25	&	10000	&	53	&	9040	&	100\\
26	&	10400	&	70	&	9036	&	100\\
27	&	10800	&	89	&	9042	&	100\\
28	&	11200	&	99	&	9044	&	100\\
29	&	11588	&	100	&	9038	&	100\\
30	&	11781	&	100	&	9039	&	100\\
\bottomrule
\end{tabular}
\end{center}
\end{table}

We have shown that the conformal lawn mower plan is within a factor of two of the optimal plan. For some paths in $\mathbb{P}$ there may be more efficient coverage solution that minimizes the ratio  $\frac{v_{\text{uav}}}{v_{\text{gv}}}$.  With full path knowledge, a coverage plan may be proposed that reduces the scanning overlap, and requires a lower sufficient velocity from the UAV as a result. 
Consider a path $P(t)$ with coverage width $w = 400$ and a maximum curvature of $\frac{1}{200}$.  A handcrafted coverage plan for $P(t)$ is presented in Figure~\ref{fig:near-optimal-coverage}, with the equivalent conformal plan in Figure~\ref{fig:conformal-coverage}.  Simulation results of both plans are presented in Table~\ref{tbl:conf-vs-opt}.   From these results, the handcrafted path reduces the minimum required by 7~m/s. The handcrafted plan, while not necessarily optimal, is clearly much better.  

We note that finding the optimal path appears to be an NP hard problem.  According to~\cite{arkin2000approximation}, the lawn mowing problems are NP hard in general; however, whether our formulation with the additional constraints is NP hard is the subject of further investigation.  

\begin{figure}
    \centering
	\subfloat[Handcrafted]{
        \includegraphics[width=0.65\linewidth]{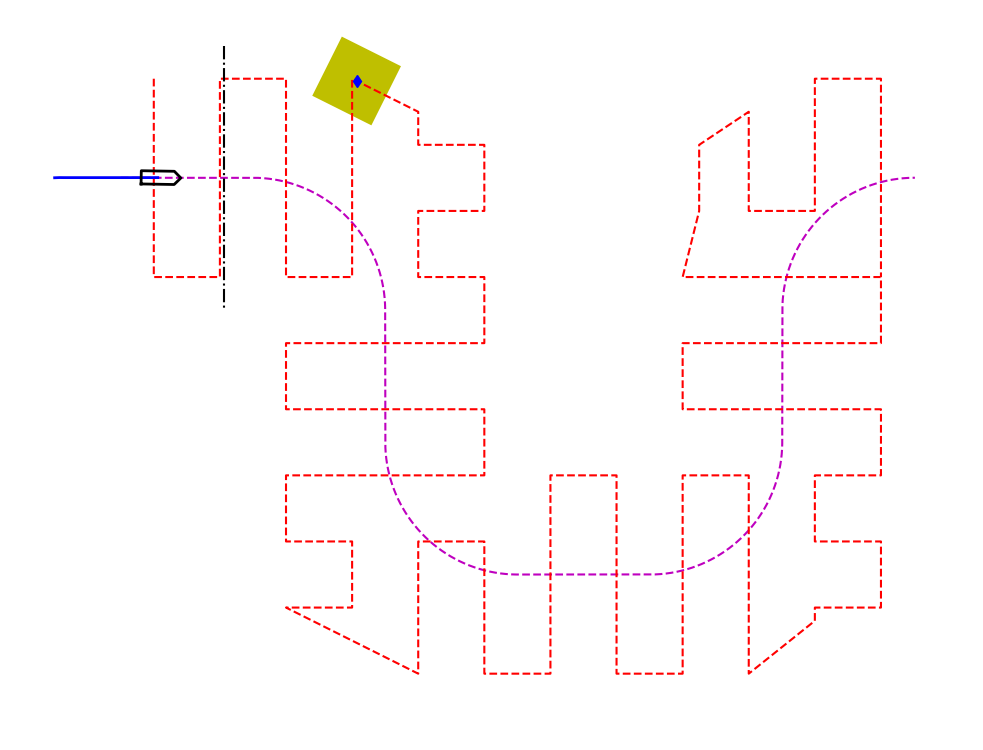}
        \label{fig:near-optimal-coverage}
    }    \\
	\subfloat[Conformal]{
        \includegraphics[width=0.65\linewidth]{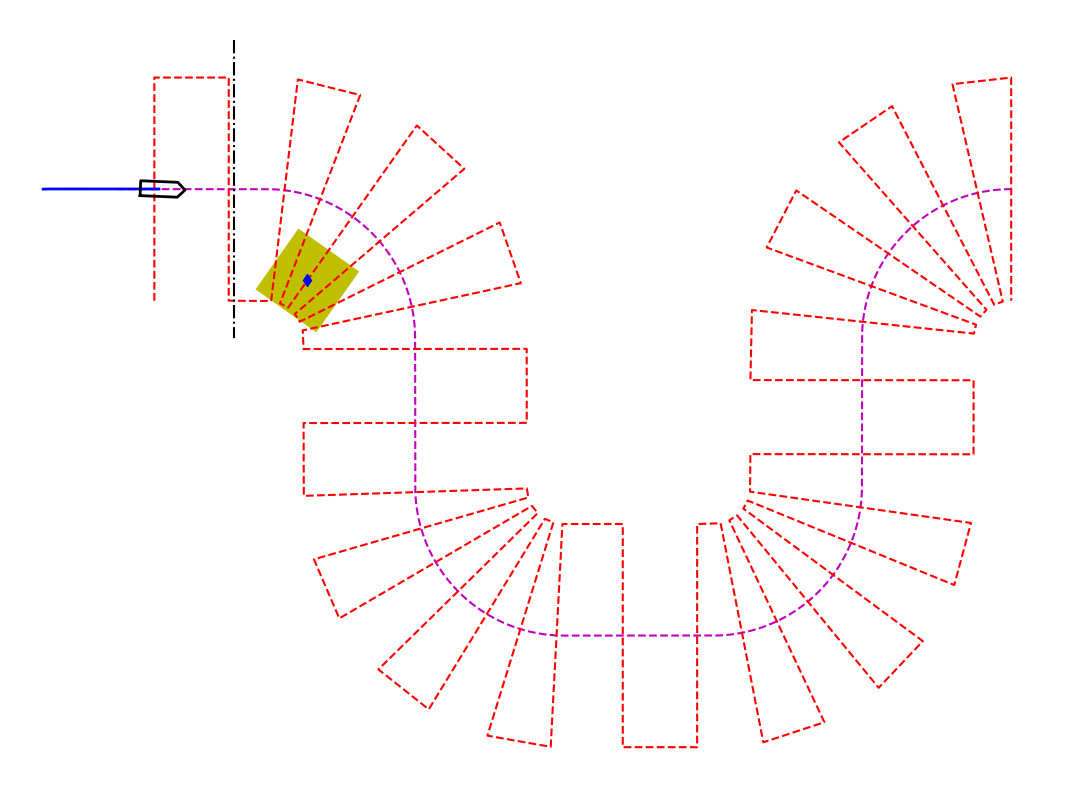}
        \label{fig:conformal-coverage}
    }    
    \caption{Two Coverage plans over a path with width 400m, curvature $\frac{1}{200}$.}
\end{figure}

\section{Simulation Results} \label{section-simulation-results}

We performed simulations in two scenarios, varying the UAV velocity on different types of ground vehicle paths (straight, decreasing curvature, and randomly generated), and using a single velocity while progressively decreasing the curvature.  For all simulations, the fixed parameters are: $v_{\text{gv}} = 5$ m/s, $w = 400$ m, $f = 100$ m.
In Table~\ref{tbl:coverage-at-100} the results of several simulations are shown.  The control case, a straight path, reaches full coverage between 20 and 21 m/s, as expected if we allow for slight rounding errors in the simulation.  Simulations were run for a random path with minimum curvature of $1/200$, as well as curvatures ranging from $1/200$ to $1/1000$.  The full simulation results are displayed in Figure~\ref{fig:coverage-and-speed}, while illustrations of some of the test paths can be seen in Figure~\ref{fig:increasing-speed-different-paths}. 
 \begin{figure}
     \centering
     \includegraphics[width=0.95\linewidth]{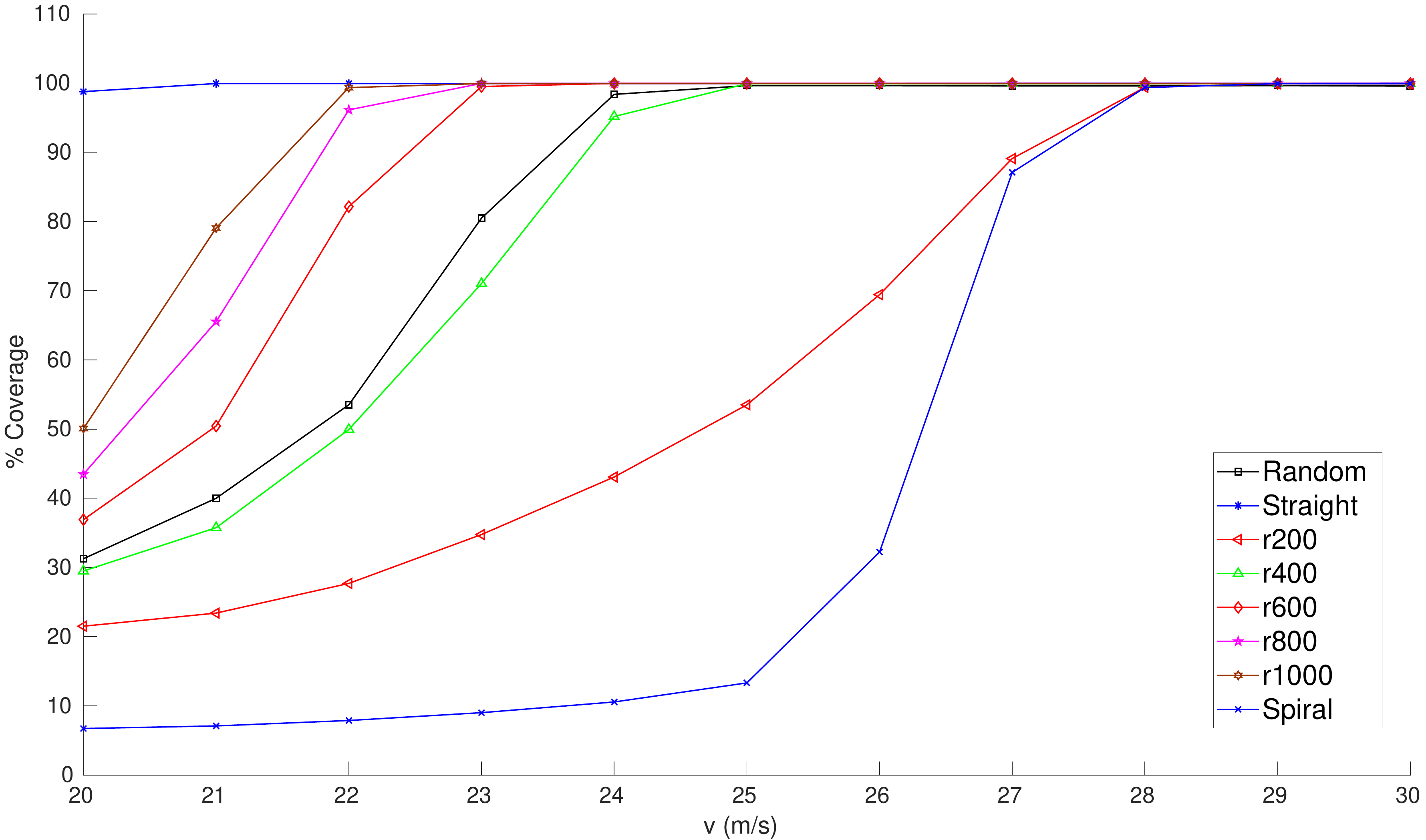}
     \caption{Percent coverage as a function of UAV velocity and path type.}
     \label{fig:coverage-and-speed}
 \end{figure}

\begin{figure*}
    \centering
    \subfloat[20 m/s]{
        \includegraphics[width=0.2\linewidth]{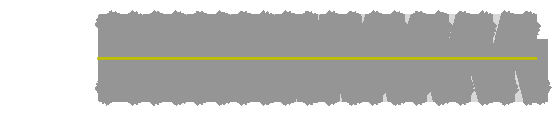}
    }
    \subfloat[22 m/s]{
        \includegraphics[width=0.2\linewidth]{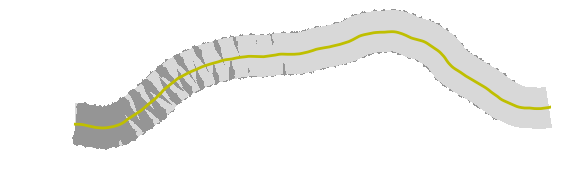}
    }
    \subfloat[22 m/s]{
        \includegraphics[width=0.2\linewidth]{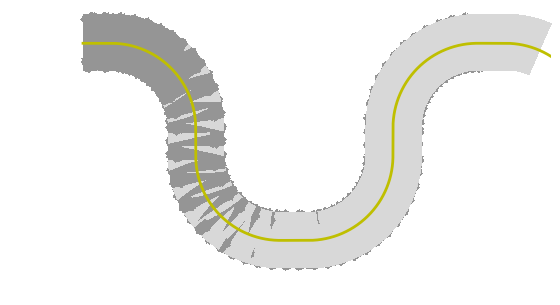}
    } 
    \subfloat[26 m/s]{
        \includegraphics[width=0.2\linewidth]{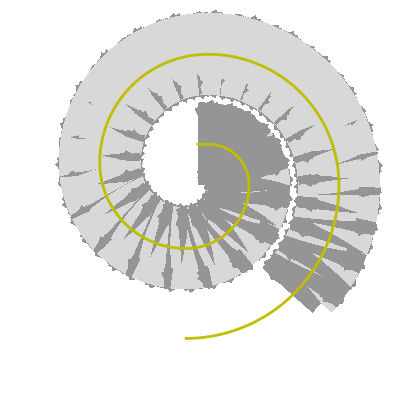}
    } \\

    \subfloat[21 m/s]{
        \includegraphics[width=0.2\linewidth]{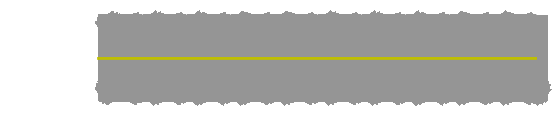}
    }
    \subfloat[25 m/s]{
        \includegraphics[width=0.2\linewidth]{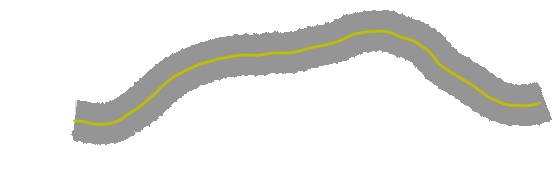}
    } 
    \subfloat[25 m/s]{
        \includegraphics[width=0.2\linewidth]{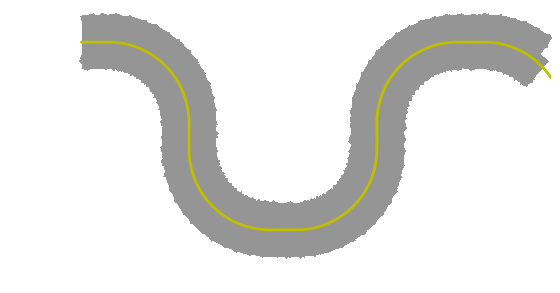}
    }  
        \subfloat[29 m/s]{
        \includegraphics[width=0.2\linewidth]{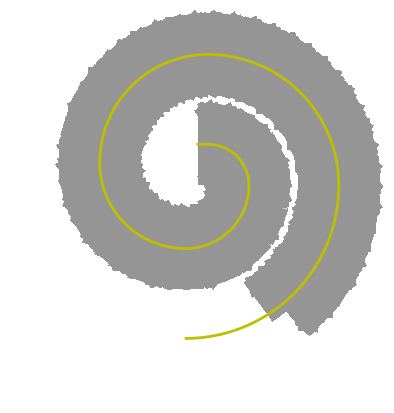}
    } \\

    \caption{Increasing UAV velocity over various path configurations.}
    \label{fig:increasing-speed-different-paths}
\end{figure*}
In all cases, the UAV and the ground vehicle start on the left side of the path, with the dark grey areas indicating successful mapping.  Areas that are light grey expired before the UAV was able to cover them.  As expected, all of the paths show increasing degrees of success as the UAV velocity is increased.  The results are summarized in Table~\ref{tbl:coverage-at-100} showing the first velocity where full coverage was achieved.  These tests also illustrate the  relation between the curvature of the path and the success rate.  For a given velocity, as the curvature of the path is increased, the success rate at mapping decreases, matching with our expectations.

We also show the effect as the curvature of the path decreases, using the path modeled in Figures~\ref{fig:25-200-radius-curves}-\ref{fig:25-400-radius-curves}.  As the curvature of the path is decreased (the radius is increasing), the UAV becomes progressively more successful in mapping the path for a given UAV velocity.

\begin{table}
\caption{A comparison of velocity vs. path type, showing the minimum velocity for complete coverage.}
 \label{tbl:coverage-at-100}
\begin{center}
\begin{tabular}{@{} l  c  c @{}}
\toprule
Path & V (m/s) & \% Coverage  \\
\midrule
straight&	21	&99.96\\ 
r200&	29&	99.89\\ 
r400&	25&	99.91\\ 
r600&	25&	99.95\\ 
r800&	23&	99.96\\ 
r1000&	23&	99.94\\ 
random&	25	&99.63 \\ 
spiral& 29& 99.92 \\
\bottomrule
\end{tabular}
\end{center}
\end{table}

\begin{figure}
    \centering
    \subfloat[r200 m]{
        \includegraphics[width=0.45\linewidth]{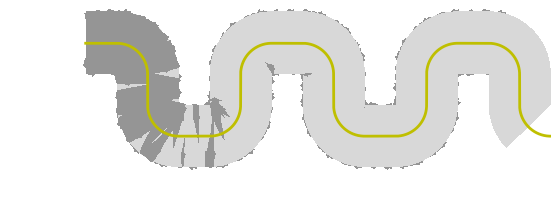}
        \label{fig:25-200-radius-curves}
    }
    \subfloat[r400 m]{
        \includegraphics[width=0.45\linewidth]{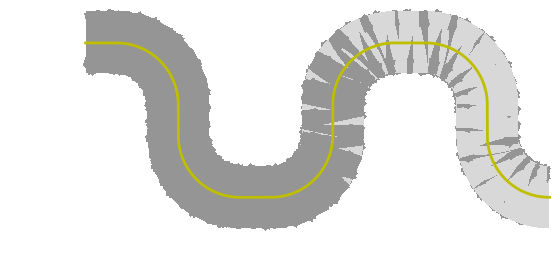}
        \label{fig:25-400-radius-curves}
    } 
    \caption{Coverage Results at $25$ m/s -- dark grey is successful coverage, light grey expired.  As the minimum path radius is increased, the UAV successfully covers a larger fraction of the total area. }
    \label{fig:fixed-speed-decreasing-curvature}
\end{figure}

\section{Conclusions and Future Work} \label{section-conclusion}

    In this paper, we defined the problem of providing path planning coverage for a moving ground vehicle.  We developed some minimum performance requirements on the part of the UAV to provide current coverage of the path immediately ahead of ground vehicle as it travels through the environment.  The plan we developed, a variation of the classic lawn mower plan, ensures complete path coverage without prior knowledge of the entire plan. We have shown that the conformal lawn mower path can be no longer than twice the length of the optimal plan, and therefore require no more than twice the UAV velocity, for any path with maximum curvature constrained less than $2/w$.  
    
    One observation from this work is that large UAV velocities are needed to successfully map a region given a reasonable ground vehicle velocity.  For our simulations, we limited the velocity of the ground vehicle to 5 m/s (or about 20 km/h), used a mapping width of 400 m, and a footprint of 100 m $\times$ 100 m.  With those parameters, the minimum required velocity for the UAV to be successful was 20 m/s, well in excess of the capability of most rotor-based UAVs, particularly for sustained flight. 
    
    Future investigations will optimize the coverage plan to the UAV operational requirements.  For paths with a high curvature, it may be appropriate to use a plan that minimizes mapping overlap given the available information. We believe that any algorithm to find the optimal plan under these constraints is likely to be NP hard~\cite{arkin2000approximation}, and this is a promising direction for future work.   We are also interested in policies for multiple UAVs working in collaboration.  From the data, we can already see that even at a slow velocity, the UAV is initially successful until the deadline/ground vehicle catches up.  Working in teams, the UAVs could coordinate their efforts to map the path and create a feasible solution.

\end{document}